\newtheoremstyle{pln}
  {0.5em} 
  {0.5em} 
  {\it} 
  {} 
  {\bfseries} 
  {.} 
  {0.5em} 
  {} 
\theoremstyle{pln}%
\newtheorem{theorem}{Theorem}
\newtheorem{proposition}[theorem]{Proposition}%
\newtheorem{lemma}[theorem]{Lemma}
\newtheoremstyle{defn}
  {0.5em} 
  {0.5em} 
  {} 
  {} 
  {\bfseries} 
  {.} 
  {0.5em} 
  {} 
\theoremstyle{defn}
\newtheorem{definition}{Definition}%
\newtheorem{remark}{Remark}%
\newcommand{\normI}[1]{\Vert #1 \Vert}
\newcommand{\normD}[1]{\left\Vert #1 \right\Vert}
\newcommand{\bfz}{\mathbf{0}}
\newcommand{\bfs}{\mathbf{s}}
\begin{document}

\title[Smoothness Assumptions and Examples]{Recent Advances in Non-convex Smoothness Conditions and Applicability to Deep Linear Neural Networks}


\author*[1]{\fnm{Vivak} \sur{Patel}}\email{vivak.patel@wisc.edu}

\author[1]{\fnm{Christian} \sur{Varner}}\email{cvarner@wisc.edu}

\affil*[1]{\orgdiv{Department of Statistics}, \orgname{University of Wisconsin}, \orgaddress{\street{1300 University Ave}, \city{Madison}, \postcode{53706}, \state{WI}, \country{USA}}}


\abstract{The presence of non-convexity in smooth optimization problems arising from deep learning have sparked new smoothness conditions in the literature and corresponding convergence analyses. We discuss these smoothness conditions, order them, provide conditions for determining whether they hold, and evaluate their applicability to training a deep linear neural network for binary classification.}

\keywords{Non-convexity, Smoothness, Lipschitz Continuity, Deep Neural Networks}



\maketitle

\section{Introduction}\label{sec-intro}
For smooth optimization problems, non-convexity has typically been addressed by one of two common strategies. 
One strategy is to focus on optimization problems for which either the gradient function or Hessian function is globally Lipschitz continuous \citep[Assumption f.gL and f.HL]{cartis2022Evaluationa}.

\begin{definition}[Globally Lipschitz Continuous] \label{def-global}
Let $(\mathcal{X}, d_\mathcal{X}),(\mathcal{Y},d_\mathcal{Y})$ be metric spaces. A continuous function, $D:\mathcal{X} \to \mathcal{Y}$, is globally Lipschitz continuous if there exists a constant $C \geq 0$ such that $d_\mathcal{Y}(D(x_1),D(x_2)) \leq C d_\mathcal{X}(x_1, x_2)$ for all $x_1,x_2 \in \mathcal{X}$. The constant $C$ is referred to as the global Lipschitz constant.
\end{definition}
\begin{remark}
Given an objective function, $f:\mathbb{R}^p\to\mathbb{R}$, $\mathcal{X} = \mathbb{R}^p$. As examples, $\mathcal{Y} = \mathbb{R}^p$ if we are considering the gradient function to be globally Lipschitz continuous; and $\mathcal{Y} = \mathbb{R}^{p\times p}$ if we are considering the Hessian function to be globally Lipschitz continuous.
\end{remark}

For functions whose gradient or Hessian functions satisfy this condition, optimization methods typically enjoy global guarantees for converging to a first-order or second-order stationary point [\citenum{cartis2022Evaluationa}, Theorems 2.2.2 and 3.1.1; \citenum{beck2017Firstorder}, Lemma 10.4; \citenum{bertsekas2016Nonlinear}, Proposition 1.2.4; \citenum{nocedal2006Numerical}, Theorem 3.2].
In the second (less common) strategy, the class of problems considered are those for which the gradient function or Hessian function is level-set Lipschitz continuous \citep[Exercise 1.2.5]{bertsekas2016Nonlinear}.

\begin{definition}[Level-set Lipschitz Continuous] \label{def-level-set}
Let $f:\mathbb{R}^p \to \mathbb{R}$. Let $L(s) = \lbrace x \in \mathbb{R}^p: f(x) \leq s \rbrace$ for every $s \in \mathbb{R}$. Suppose $D$, a (higher-order) derivative of $f$, exists and is continuous everywhere in $\mathbb{R}^p$. $D$ is level-set Lipschitz continuous if for every $s \in \mathbb{R}$ such that $L(s) \neq \emptyset$, there exists $C_s \geq 0$ such that
\begin{equation}
	\normD{D(x_1) - D(x_2)} \leq C_s \normD{x_1 - x_2} ~ \forall x_1, x_2 \in L(s).\footnote{As norms are equivalent in finite-dimensional vector spaces, the choice of norms can be arbitrary here.}
\end{equation}
\end{definition}

The class of functions whose gradient is level-set Lipschitz continuous clearly contains those whose gradient is globally Lipschitz continuous.
For example, the class of functions whose gradient is level-set Lipschitz continuous includes $f(x) = x^{2k+2}$ for all $k \in \mathbb{N}$, yet these monomials are \textit{not} in the class of functions whose gradient is globally Lipschitz continuous (see Lemma \ref{result-equivalence-lipschitz-derivative}).
Despite this ordering, these two function classes have never had much distinction as historically popular optimization methods \textit{enforce descent}, ensuring that the regions explored by the iterates have a common (worst-case) Lipschitz constant \citep[e.g.,][Theorem 3.2]{nocedal2006Numerical}.

With a resurgence of methods that \textit{do not enforce descent} to address non-convex optimization problems for (smooth) deep learning, most analyses require that a derivative function be globally Lipschitz continuous (see examples in Table \ref{table-objective-free}).
However, the class of functions for which the gradient function is globally Lipschitz continuous was shown to be inapplicable to certain smooth optimization problems arising in machine learning, such as training one-dimensional feed forward networks with three hidden layers and training one-dimensional recurrent neural networks with three time steps \citep[Propositions 1 to 2]{patel2022Global}.

\begin{table}[ht]
\centering
\caption{Example convergence results for optimization methods that require global Lipschitz continuity of a derivative function.}
\label{table-objective-free}
\footnotesize
\begin{tabular}{p{2in}p{2in}} \toprule
\textbf{Method} & \textbf{Convergence} \\ \midrule
Diminishing Step-Size & \cite[Proposition 1.2.4]{bertsekas2016Nonlinear} \\
Constant Step-Size & \cite[Corollary 1]{armijo1966Minimization} \\
Barzilai-Borwein Methods & \cite[\S 4]{barzilai1988Twopoint}, \cite[Theorem 3.3]{burdakov2019Stabilized} \\
Nesterov's Acceleration Method & \cite[Theorem 6]{nesterov2013Gradient} \\
Bregman Distance Method & \cite[Theorem 1]{bauschke2017Descent} \\
Negative Curvature Method & \cite[Theorem 1]{curtis2019Exploiting} \\
Lipschitz Approximation & \cite[Theorem 1]{malitsky2020Adaptive} \\
Weighted Gradient-Norm Damping & \cite[Theorem 2.3]{wu2020Wngrad}, \cite[Corollary 1]{grapiglia2022Adaptive} \\
Adaptively Scaled Trust Region & \cite[Theorem 3.10]{gratton2022Convergence} \\ \botrule
\end{tabular}
\end{table}

A natural generalization that accounts for such problems is to require that the needed derivative function is locally Lipschitz continuous \citep[e.g.,][Assumption 2.2]{patel2024Gradient}.
\begin{definition}[Locally Lipschitz Continuous] \label{def-local}
Let $(\mathcal{X}, d_\mathcal{X}),(\mathcal{Y},d_\mathcal{Y})$ be metric spaces. A  continuous function $D:\mathcal{X} \to \mathcal{Y}$ is locally Lipschitz continuous if, for every $x \in \mathcal{X}$, there exists an open ball around $x$, denoted $B_x$, and there exists a constant $C_{B_x} \geq 0$ such that $d_\mathcal{Y}(D(x_1),D(x_2)) \leq C_{B_x} d_\mathcal{X}(x_1, x_2)$ for all $x_1,x_2 \in B_x$.
\end{definition}

Unfortunately, under this condition, optimization methods can behave counterintuitively: they can produce iterates whose optimality gap diverges and whose gradients remain bounded away from zero [\citenum{patel2024Gradient}, Proposition 4.4]. Indeed, such behavior is shown to occur for all of the optimization methods in Table \ref{table-objective-free} using carefully constructed objective functions whose gradient functions are locally Lipschitz continuous [\citenum{varner2024Challenges}, Proposition 3.3 to 3.11].

Fortunately, new smoothness conditions have appeared under which certain optimization methods \textit{that do not enforce descent} are shown to behave as desired \cite{chen2023GeneralizedSmooth,li2023Convex}. These new smoothness conditions fill in some of the gaps between global Lipschitz continuity and local Lipschitz continuity. 
Our goal here is to discuss these smoothness conditions used in convergence analyses (\S\ref{sec-conditions});
put them in an ordering by generality (\S\ref{sec-ordering});
develop relationships between smoothness conditions and differentiability (\S\ref{sec-differentiability});
and evaluate their applicability to training deep linear neural networks (\S\ref{sec-examples}). 

For training deep linear neural networks for binary classification, we show that these recently proposed conditions do not apply, and only Definition \ref{def-local} is appropriate for the gradient function.
Owing to this result, we recommend that these recently developed smoothness conditions be verified for a given function class---such as those arising for training more complex deep neural networks---before being assumed to hold in the analysis of an optimization method on this function class.

\section{Smoothness Conditions} \label{sec-conditions}
In general, new smoothness conditions allow the local Lipschitz constant to grow at a certain rate that may depend on properties of the objective function or its derivatives. Stemming from the works of \citep{zhang2019Why,zhang2020Improved,li2023Convex}, one new smoothness condition occurs when the gradient function is $\rho$-order Lipschitz continuous.

\begin{definition}[$\rho$-Order Lipschitz Continuous] \label{defn-rho-order}
Let $(\mathcal{X},\normI{\cdot}_\mathcal{X}), (\mathcal{Y}, \normI{\cdot}_{\mathcal{Y}})$ be normed vector spaces.
Let $\rho \geq 0$.
A continuous function, $D: \mathcal{X} \to \mathcal{Y}$, is $\rho$-order Lipschitz continuous if there exist constants $C_0, C_1 \geq 0$ and, for every $x \in \mathcal{X}$, there exist an open ball around $x$, $B_x$, such that
	\begin{equation}
		\normD{ D(x_1) - D(x_2) }_\mathcal{Y} \leq (C_0 + C_1 \normD{ D(x)}_\mathcal{Y}^\rho) \normD{ x_1 - x_2 }_\mathcal{X} ~\forall x_1,x_2 \in B_x.
	\end{equation}
\end{definition}

Stemming from \citep{chen2023GeneralizedSmooth}, another smoothness condition coincides with the gradient function being $\rho$-integrated Lipschitz continuous.
\begin{definition}[$\rho$-Integrated Lipschitz Continuous] \label{defn-rho-integrated}
Let $(\mathcal{X},\normI{\cdot}_\mathcal{X}), (\mathcal{Y}, \normI{\cdot}_{\mathcal{Y}})$ be normed vector spaces.
Let $\rho \geq 0$.
A continuous function $D: \mathcal{X} \to \mathcal{Y}$ is $\rho$-integrated Lipschitz continuous if there exist constants $C_0, C_1 \geq 0$ such that, $\forall x_1, x_2 \in \mathcal{X}$,
	\begin{equation}
		\normD{ D(x_1) - D(x_2) }_\mathcal{Y} \leq \left( C_0 + C_1 \int_0^1 \normD{D(x_1 + t(x_2 - x_1))}_\mathcal{Y}^\rho dt \right) \normD{ x_1 - x_2}_{\mathcal{X}}.
	\end{equation}
\end{definition}

Before we move onto specifying the relationships between the different choices of Lipschitz continuities, we note that there are two other smoothness condition stemming from our previous work \citep{patel2022Global,varner2023Novel}. In the first, the smoothness condition can only be expressed for a gradient-based algorithm where the angle between the search direction and gradient function can be well-controlled \citep[Assumption 5]{patel2022Global}, which is too specialized for the discussion here. In the second, the smoothness condition is more general than those here (aside from Definition \ref{def-local}) \citep[Theorem 3.10, Part 3]{varner2023Novel}, but requires a procedure that enforces descent, which is not within the scope of this work.

\section{Ordering} \label{sec-ordering}
To roughly summarize this section, we show
\begin{equation} \label{eqn-ordering}
\mathrm{Globally} \underset{(I)}{\subset} \rho\mathrm{-integrated} \underset{(II)}{=} \rho\mathrm{-order} \underset{(III)}{\subset} \mathrm{Locally},
\end{equation}
where the inclusions are strict.

\textbf{(I)} For any function satisfying Definition \ref{def-global} with global Lipschitz constant $C \geq 0$, the function satisfies Definition \ref{defn-rho-order} with $C_0 = C$ and $C_1 = 0$ for all $\rho \geq 0$.
Furthermore, we show in Proposition \ref{result-example-notGlobal-rhoOrder} that, for every $\rho > 0$, there exists a function that satisfies Definition \ref{defn-rho-order} but not Definition \ref{def-global}.%
\footnote{The case of $\rho = 0$ for Definition \ref{defn-rho-order} coincides with Definition \ref{def-global}.}

\textbf{(II)} We prove the following result, which generalizes \citep[Theorem 1]{chen2023GeneralizedSmooth}.

\begin{proposition} \label{result-order-integrated-equivalent}
Let $(\mathcal{X},\normI{\cdot}_\mathcal{X}), (\mathcal{Y}, \normI{\cdot}_{\mathcal{Y}})$ be normed vector spaces over $\mathbb{R}$ or $\mathbb{C}$. Let $D: \mathcal{X} \to \mathcal{Y}$ be continuous and let $\rho \geq 0$. $D$ is $\rho$-order Lipschitz continuous if and only if it is $\rho$-integrated Lipschitz continuous.
\end{proposition}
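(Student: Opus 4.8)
The plan is to prove the two implications separately, treating $\rho$-integrated $\Rightarrow$ $\rho$-order as the routine direction and $\rho$-order $\Rightarrow$ $\rho$-integrated as the substantive one; throughout we may assume $\rho > 0$, since the footnoted case $\rho = 0$ reduces both definitions to global (hence local) Lipschitz continuity with constant $C_0 + C_1$. The elementary facts I would use repeatedly are that norm balls are convex, so the line segment $t \mapsto x_1 + t(x_2 - x_1)$, $t \in [0,1]$, stays inside any ball containing its endpoints even over $\mathbb{C}$ (the parameter $t$ is real); that $\normD{(x_1 + t(x_2-x_1)) - (x_1 + s(x_2-x_1))} = |t - s|\,\normD{x_2 - x_1}$; and that $t \mapsto \normD{D(x_1 + t(x_2-x_1))}^\rho$ is continuous, being a composition of continuous maps.

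For $\rho$-integrated $\Rightarrow$ $\rho$-order, I would fix $x$ and use continuity of $y \mapsto \normD{D(y)}^\rho$ at $x$ to choose a ball $B_x$ on which $\normD{D(y)}^\rho \leq \normD{D(x)}^\rho + 1$. For $x_1, x_2 \in B_x$, convexity keeps the whole segment in $B_x$, so the integrand in Definition \ref{defn-rho-integrated} is bounded by $\normD{D(x)}^\rho + 1$ and hence the integral by the same constant. Substituting into the integrated inequality yields the $\rho$-order inequality on $B_x$ with the \emph{uniform} constants $C_0' = C_0 + C_1$ and $C_1' = C_1$, which is precisely what Definition \ref{defn-rho-order} requires.

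For the converse, I would reduce the global inequality to a one-dimensional statement along the segment. Fixing $x_1, x_2$, set $\gamma(t) = x_1 + t(x_2 - x_1)$, $\phi(t) = D(\gamma(t))$, and $M(t) = C_0 + C_1 \normD{D(\gamma(t))}^\rho$, a continuous function of $t \in [0,1]$. For each $\tau$, continuity of $\gamma$ places a parameter neighborhood of $\tau$ inside the ball $B_{\gamma(\tau)}$ supplied by Definition \ref{defn-rho-order}; applying the $\rho$-order inequality with $x = \gamma(\tau)$ to points of that neighborhood gives the local Lipschitz bound $\normD{\phi(s) - \phi(s')} \leq M(\tau)\,|s - s'|\,\normD{x_2 - x_1}$ there. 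Letting $F(t) = \normD{\phi(t) - \phi(0)}$, this bound shows that $F$ is continuous and has upper right Dini derivative at most $M(\tau)\normD{x_2 - x_1}$ at every $\tau \in [0,1)$. I would then invoke the mean value inequality for Dini derivatives---if $F$ is continuous and its upper right Dini derivative is bounded by a continuous function $m$, then $F(1) - F(0) \leq \int_0^1 m$---to conclude $\normD{D(x_2) - D(x_1)} = F(1) \leq \normD{x_2 - x_1}\int_0^1 M(t)\,dt$, which is exactly Definition \ref{defn-rho-integrated}.

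The main obstacle is this last passage from the pointwise local bounds to the global integral. The tempting route---covering the compact segment by finitely many balls and telescoping into a Riemann sum---does not converge cleanly, because the Lebesgue-number lemma only guarantees that each subinterval lies in \emph{some} ball whose center may be far from the subinterval, so the resulting sums need not approach $\int_0^1 M$. Routing the argument through the Dini-derivative mean value inequality sidesteps this bookkeeping entirely; I would either cite a standard reference for that inequality or include its short proof, which compares $F(t) - \int_0^t m - \epsilon t$ against zero, using that a continuous function with everywhere-negative upper right Dini derivative is nonincreasing, and then lets $\epsilon \downarrow 0$.
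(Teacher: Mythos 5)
Your proof is correct, and the hard direction is genuinely different from the paper's. The easy direction ($\rho$-integrated $\Rightarrow$ $\rho$-order) matches the paper's argument (continuity of $y \mapsto \normI{D(y)}_\mathcal{Y}^\rho$ plus convexity of balls, which you make explicit and the paper leaves implicit). For the converse, the paper stays with the covering/telescoping strategy you rejected, but repairs exactly the defect you identified: a dedicated lemma (proved via compactness and connectedness of the segment) produces nested partitions with mesh tending to zero in which each pair of consecutive points lies in a ball \emph{centered at one of the two endpoints}. The telescoped $\rho$-order bounds then form a tagged Riemann sum of the continuous function $t \mapsto C_0 + C_1\normI{D(x_1 + t(x_2-x_1))}_\mathcal{Y}^\rho$ with tags at subinterval endpoints, so the sum does converge to the integral --- your objection applies to a naive Lebesgue-number argument, not to the paper's construction. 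Your alternative routes around all of this bookkeeping: bounding the upper right Dini derivative of $F(t) = \normI{D(\gamma(t)) - D(\gamma(0))}_\mathcal{Y}$ by the continuous majorant $M(t)\normI{x_2 - x_1}_\mathcal{X}$ and invoking the mean value inequality for Dini derivatives. This is shorter and leans on a standard real-analysis lemma (which you rightly offer to prove, since the paper cannot cite itself for it), whereas the paper's proof is self-contained Riemann-integration machinery; both are valid trades. One small circularity to fix: you justify restricting to $\rho > 0$ by saying $\rho = 0$ reduces the $\rho$-order condition to \emph{global} Lipschitz continuity, but that reduction (uniform-constant local Lipschitz $\Rightarrow$ global Lipschitz) is precisely the $\rho = 0$ instance of the proposition being proved. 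Since your Dini-derivative argument works verbatim for $\rho = 0$ (the majorant is then the constant $C_0 + C_1$), simply drop the restriction rather than appeal to the footnote.
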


\begin{proof}[Proof of sufficiency of Proposition \ref{result-order-integrated-equivalent}]
Suppose $D$ is $\rho$-integrated Lipschitz continuous. 
Let $\epsilon > 0$.
Then, by the continuity of $z \mapsto \normI{D(z)}_2^\rho$, for each $x \in \mathcal{X}$, there exists an open ball, $B_x$, such that $\normI{D(z)}_\mathcal{Y}^\rho \leq \normI{D(x)}_\mathcal{Y}^\rho + \epsilon$ for all $z \in B_x$. Then, for any $x_1,x_2 \in B_x$,
\begin{small}
\begin{align}
	\normD{ D(x_2) - D(x_1) }_{\mathcal{Y}} &\leq \left( C_0 + C_1 \int_0^1 \normD{ D(x_1 + t(x_2 - x_1))}_\mathcal{Y}^\rho dt \right) \normD{x_2 - x_1}_\mathcal{X} \\
	&\leq (C_0 + C_1 \normD{ D(x)}_\mathcal{Y}^\rho + \epsilon) \normD{x_2 - x_1}_\mathcal{X}.
\end{align}
\end{small}
Thus, $D$ is $\rho$-order Lipschitz continuous.\footnote{We can try to optimize the choice of constants $C_0, C_1$ for the purposes for proving rate-of-convergence results. However, as rates of convergence are not the focus of this work, we will not discuss optimizing these constants.}
\end{proof}

To prove necessity, we will need some machinery around partitions (as is needed for a Riemann integral). A finite partition $P$ of $[0,1]$ is a collection of points, $\lbrace t_i : i=0,\ldots,|P| \rbrace$, such that $0=t_0 < t_1 < \cdots < t_{|P|-1} < t_{|P|} = 1$. 
The index set of $P$, $I_P$, is $\lbrace 0,1,\ldots,|P|-1 \rbrace$.%
\footnote{We do not include $|P|$ in $I_P$ as we will always need pairs of points $i \in I_P$ and $i+1$.}
The mesh of $P$ is $M_P = \max_{i \in I_P} t_{i+1} - t_i$. 
Two finite partitions $P,P'$ have the relationship $P \subset P'$ if for every $t \in P$, $t \in P'$ and $|P| < |P'|$, and we say $P'$ is a refinement of $P$.

Owing to the properties of Definition \ref{defn-rho-order}, we will need to be able to find a refinement of any partition $P$ in the following manner.

\begin{lemma} \label{result-partition-completion}
Let $(\mathcal{X},d_{\mathcal{X}})$ be a metric space; $x_1,x_2 \in \mathcal{X}$; $z(t) = x_1 + t(x_2-x_1) \in \mathcal{X}$ for $t \in [0,1]$; and let $L = \lbrace z(t) : t\in[0,1] \rbrace$.
Furthermore, suppose for every point $z \in \mathcal{L}$ there exists a fixed open ball $B_z$. 
Let $P$ be a finite partition of $[0,1]$. Then, there exists a finite partition 
$P^*$ that refines $P$ such that either $z(t_i) \in B_{z(t_{i+1})}$ or $z(t_{i+1}) \in B_{z(t_i)}$ for all $i \in I_{P^*}$ and $t_i,t_{i+1} \in P^*$.
\end{lemma}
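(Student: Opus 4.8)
The plan is to build the refinement one subinterval of $P$ at a time, using a continuation (least-upper-bound) argument rather than a uniform mesh bound. The reason a mesh bound is unavailable is the main conceptual obstacle: the radii of the balls $B_z$ are only required to be positive, with no lower bound along $L$, so there is in general no $\delta>0$ for which $|t-t'|<\delta$ forces $z(t)\in B_{z(t')}$ or $z(t')\in B_{z(t)}$. Consequently one cannot simply refine $P$ until its mesh is small; the partition points must be chosen adaptively. A further wrinkle is that the desired property is \emph{not} preserved under refinement, so it does not suffice to produce some good partition and then take its union with $P$. Instead I would anchor the construction at the points of $P$.

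Concretely, I would fix one pair of consecutive points $a<b$ of $P$ and produce a finite chain $a=u_0<u_1<\cdots<u_k=b$ in $[a,b]$ such that for each $i$ either $z(u_i)\in B_{z(u_{i+1})}$ or $z(u_{i+1})\in B_{z(u_i)}$. Define $T$ to be the set of $t\in[a,b]$ that are the right endpoint of such a finite chain starting at $a$; note $a\in T$ trivially (the one-point chain), so $T\neq\emptyset$, and set $c=\sup T$. The first step is to show $c\in T$: by continuity of $z$ there is $\eta>0$ with $z(t)\in B_{z(c)}$ whenever $t\in[a,b]$ and $|t-c|<\eta$; choosing $t^*\in T$ with $c-\eta<t^*\le c$ and appending $c$ to a chain for $t^*$ (the pair $(t^*,c)$ is admissible because $z(t^*)\in B_{z(c)}$) yields a finite chain for $c$. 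The second step is to show $c=b$: if $c<b$, the same continuity argument produces $t^{**}\in(c,b]$ with $z(t^{**})\in B_{z(c)}$, and appending $t^{**}$ to a chain for $c$ places $t^{**}\in T$ with $t^{**}>c$, contradicting $c=\sup T$. Hence $b=c\in T$, giving the desired finite chain on $[a,b]$.

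Having done this for every consecutive pair of $P$, I would take $P^*$ to be the union of all the resulting chain points. Since every point of $P$ is retained, $P\subseteq P^*$; and because the points of $P$ are themselves chain endpoints, no consecutive pair of $P^*$ straddles a point of $P$, so every consecutive pair of $P^*$ is a consecutive pair of one of the chains and hence admissible. To guarantee the strict inequality $|P|<|P^*|$ required for a refinement, I would insert one extra admissible point into a single subinterval---e.g., choose $w$ just above $t_0$ so that $z(w)\in B_{z(t_0)}$ by continuity, and run the continuation argument on $[w,t_1]$ instead, prepending $t_0$.

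The only genuinely delicate point is verifying that the chain reaching $b$ is \emph{finite}: this is exactly what the least-upper-bound argument secures, since membership in $T$ means a finite chain exists and the extension steps append only a single point to an already-finite chain. I expect the bookkeeping around the supremum (attainment of $c$ and the strict-increase contradiction) to be the part requiring the most care, whereas the continuity estimates themselves are routine.
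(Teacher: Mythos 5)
Your proof is correct, and it takes a genuinely different route from the paper's. The paper argues by compactness on each inadmissible subinterval $[s_j,s_{j+1}]$ of $P$: the balls $\lbrace B_{z(t)} : t \in [s_j,s_{j+1}]\rbrace$ form an open cover of the compact sub-segment, a finite subcover yields a partition $Q^j$, and any remaining inadmissible consecutive pair $(r_k,r_{k+1})$ of $Q^j$ is patched by inserting a point of $\lbrace z(t) : t \in (r_k,r_{k+1})\rbrace \cap B_{z(r_k)} \cap B_{z(r_{k+1})}$, with nonemptiness of this intersection asserted ``since the segment is connected''; the final $P^*$ is $P$ together with all the $Q^j$. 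Your continuation argument replaces both the subcover extraction and the patching step with a single least-upper-bound argument per subinterval. What the paper's route buys is brevity and the reuse of standard compactness machinery. What your route buys is rigor precisely at the paper's most delicate point: the intersection claim does not follow from connectedness alone, since a finite subcover of a segment can have consecutive centers (in the ordering of $t$) whose balls are disjoint, the gap between them being covered by a ball centered at a third, far-away point; so the paper's patching step requires either a more careful choice of subcover or an iteration, whereas your supremum argument never needs two balls to intersect --- it only uses that $B_{z(c)}$ is open and contains $z(c)$, plus the least-upper-bound property of $[a,b]$. A secondary difference is that you explicitly arrange the strict inequality $|P| < |P^*|$ demanded by the paper's definition of refinement; the paper's construction returns $P$ itself in the degenerate case where every consecutive pair of $P$ is already admissible, so your extra inserted point is actually needed to meet the stated conclusion literally. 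Both arguments use nothing about the segment beyond continuity of $t \mapsto z(t)$, so both generalize to arbitrary continuous paths.
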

\begin{proof}
Let $j \in I_P$ such that $z(s_j) \not\in B_{z(s_{j+1})}$ and $z(s_{j+1}) \not\in B_{z(s_{j})}$ where $s_j,s_{j+1} \in P$. $\lbrace B_z(t): t \in [s_j, s_{j+1}] \rbrace$ is an open cover of a compact set, for which there exists a finite sub-cover that corresponds to a partition of $[s_j, s_{j+1}]$ that we denote by $Q^j$. Now, suppose $k \in I_{Q^j}$ such that $z(r_k) \not\in B_{z(r_{k+1})}$ and $z(r_{k+1}) \not\in B_{z(r_k)}$ where $r_k, r_{k+1} \in Q^j$. Since $\lbrace z(t) : t \in [s_j, s_{j+1}] \rbrace$ is connected, $B_{z(r_k)} \cap B_{z(r_{k+1})} \neq \emptyset$. We then add a time point to $Q^j$ that corresponds to a $z \in \lbrace z(t) : t \in (r_k, r_{k+1}) \rbrace \cap B_{z(r_k)} \cap B_{z(r_{k+1})}$. Since the initial sub-cover is finite, the final partition of $[s_j, s_{j+1}]$, $Q^j$, is finite, and, for $t \in I_{Q^j}$, $z(r_t) \in B_{z(r_{t+1})}$ and/or $z(r_{t+1}) \in B_{z(r_t)}$. Finally, we consider the partition $P$ in union with all $Q^j$ for $j \in I_P$ such that $z(s_j) \not\in B_{z(s_{j+1})}$ and $z(s_{j+1}) \not\in B_{z(s_{j})}$ where $s_j,s_{j+1} \in P$. This partition then has the desired property.
\end{proof}

\begin{proof}[Proof of necessity of Proposition \ref{result-order-integrated-equivalent}]
Suppose $D$ is $\rho$-order Lipschitz continuous. Let $x_1, x_2 \in \mathcal{X}$. Let $z(t) = x_1 + t(x_2 - x_1)$ for $t \in [0,1]$. Let $L = \lbrace z(t) : t\in [0,1] \rbrace$. For each $t \in [0,1]$, there exists a $B_{z(t)}$ corresponding to Definition \ref{defn-rho-order}. 
By Lemma \ref{result-partition-completion}, we can construct a sequence of partitions $\lbrace P_n \rbrace$ where $P_n \subset P_{n+1}$ with $\lim_{n} M_{P_n} = 0$ 
and, for every $i \in I_{P_n}$, $z(t_i) \in B_{z(t_{i+1})}$ and/or $z(t_{i+1}) \in B_{z(t_i)}$. Define $z^*(t_i) = z(t_i)$ if $z(t_{i+1}) \in B_{z(t_i)}$ and $z^*(t_i) = z(t_{i+1})$ if $z(t_{i}) \in B_{z(t_{i+1})}$. Then,
\begin{small}
\begin{align}
	\normD{ D(x_2) - D(x_1) }_{\mathcal{Y}} &\leq \sum_{i \in I_{P_n}} \normD{ D(z(t_{i+1})) - D(z(t_i))}_\mathcal{Y} \\
										   &\leq \sum_{i \in I_{P_n}} (C_0 + C_1 \normD{ D(z^*(t_i))}_\mathcal{Y}^\rho ) \normD{ z(t_{i+1}) - z(t_i)}_\mathcal{X} \\
										   &\leq \left(C_0 + C_1\sum_{i \in I_{P_n}} \normD{D(z^*(t_i))}_\mathcal{Y}^\rho (t_{i+1} - t_i) \right) \normD{ x_2 - x_1 }_\mathcal{X}. 
\end{align}
\end{small}

Since $D$ is a continuous function on $L$, the sum converges to its corresponding Riemann integral as $n \to \infty$. Hence, $D$ is $\rho$-integrated Lipschitz continuous.
\end{proof}

\textbf{(III)} For any function satisfying Definition \ref{defn-rho-order} with $C_0, C_1 \geq 0$ and $\rho \geq 0$ in each ball $B_x$, the function satisfies Definition \ref{def-local} with $C_{B_x} = C_0 + C_1 \normI{D(x)}_\mathcal{Y}^\rho$ in each ball $B_x$. 
Furthermore, we show in \S \ref{sec-examples} that the gradient function for training a deep linear neural network does not satisfy Definition \ref{defn-rho-order} for any $\rho, C_0, C_1 \geq 0$, yet does satisfy Definition \ref{def-local}. In other words, the class of functions satisfying Definition \ref{def-local} strictly contains the class of function satisfying Definition \ref{defn-rho-order} for all $\rho \geq 0$.

\section{Continuity Conditions and Differentiability} \label{sec-differentiability}
We now develop conditions related to differentiability that allow us to check whether a gradient function is globally Lipschitz continuous or $\rho$-order Lipschitz continuous / $\rho$-integrated Lipschitz continuous.
Our first sufficient and necessary condition extends the well-known result that a twice continuous differentiable function has a globally Lipschitz continuous gradient function if its Hessian is bounded.%
\footnote{Our result complements Proposition 3.2 of \cite{li2023Convex}. Proposition 3.2 of \cite{li2023Convex} allows for a more general relationship on the growth of the Lipschitz constant, whereas we allow for cases in which the constants are zero.}

\begin{lemma} \label{result-equivalence-lipschitz-derivative}
Let $f:\mathbb{R}^p \to \mathbb{R}$ be twice continuously differentiable. Let $\rho \geq 0$. $\nabla f(x)$ is $\rho$-order Lipschitz continuous if and only if $\exists C_0, C_1 \geq 0$ such that $\normI{\nabla^2 f(x) }_2 \leq C_0 + C_1 \normI{ \nabla f(x)}_2^\rho$, $\forall x \in \mathbb{R}^p$.
\end{lemma}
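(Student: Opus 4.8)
The plan is to prove the two directions separately, using Proposition \ref{result-order-integrated-equivalent} to dispatch sufficiency almost for free and a limiting-difference-quotient argument for necessity.

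For sufficiency, suppose the Hessian bound $\normI{\nabla^2 f(x)}_2 \leq C_0 + C_1\normI{\nabla f(x)}_2^\rho$ holds everywhere. I would fix arbitrary $x_1, x_2 \in \mathbb{R}^p$, set $z(t) = x_1 + t(x_2 - x_1)$, and use that $f \in C^2$ to write $\nabla f(x_2) - \nabla f(x_1) = \int_0^1 \nabla^2 f(z(t))(x_2 - x_1)\,dt$ via the fundamental theorem of calculus applied to $t \mapsto \nabla f(z(t))$. Passing to norms, factoring out $\normD{x_2 - x_1}_2$, and substituting the Hessian bound yields
\[
\normD{\nabla f(x_2) - \nabla f(x_1)}_2 \leq \left(C_0 + C_1 \int_0^1 \normD{\nabla f(z(t))}_2^\rho\,dt\right)\normD{x_2 - x_1}_2,
\]
which is precisely Definition \ref{defn-rho-integrated} with the same constants. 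Invoking Proposition \ref{result-order-integrated-equivalent} then gives that $\nabla f$ is $\rho$-order Lipschitz continuous.

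For necessity, suppose $\nabla f$ is $\rho$-order Lipschitz continuous with constants $C_0, C_1$ and balls $B_x$. I would fix $x$ and a unit vector $v$ and exploit that, because $f \in C^2$, the directional difference quotient converges: $h^{-1}(\nabla f(x + hv) - \nabla f(x)) \to \nabla^2 f(x) v$ as $h \to 0^+$. For $h$ small enough both $x$ and $x + hv$ lie in the single ball $B_x$, so applying Definition \ref{defn-rho-order} with center $x$, $x_1 = x$, and $x_2 = x + hv$ bounds the quotient by $C_0 + C_1\normI{\nabla f(x)}_2^\rho$ uniformly in $h$. Letting $h \to 0^+$ gives $\normI{\nabla^2 f(x) v}_2 \leq C_0 + C_1\normI{\nabla f(x)}_2^\rho$, and taking the supremum over unit $v$ (which equals $\normI{\nabla^2 f(x)}_2$ since the Hessian is symmetric) produces the claimed bound at the arbitrary point $x$.

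The argument is short, and the only delicate points lie in the necessity direction. The first is correctly identifying the operator norm of the Hessian with the supremum over directions of the limiting directional difference quotients of $\nabla f$; this is exactly where twice continuous differentiability is essential. The second---and the reason the inequality comes out with the sharp factor $\normI{\nabla f(x)}_2^\rho$ rather than a supremum of $\normI{\nabla f}_2^\rho$ over the ball---is that Definition \ref{defn-rho-order} permits evaluating the growth factor at the ball's own center. By choosing that center to be the very point $x$ at which the Hessian is estimated, and keeping $x_1 = x$ fixed while only $x_2 = x + hv$ varies, the right-hand constant stays equal to $C_0 + C_1\normI{\nabla f(x)}_2^\rho$ throughout the limit. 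The edge case $\rho = 0$ needs no separate treatment, since with the convention $0^0 = 1$ the bound reduces to a uniform Hessian bound, consistent with global Lipschitz continuity of $\nabla f$.
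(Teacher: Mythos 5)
Your proof is correct, and both directions hold up; the interesting difference from the paper lies in the sufficiency direction. For necessity, your argument is essentially the paper's: both fix $x$, take $x_1 = x$ and $x_2 = x + hv$ inside the single ball $B_x$ so that the growth factor $C_0 + C_1\normI{\nabla f(x)}_2^\rho$ is anchored at the ball's center, and pass to the limit $h \downarrow 0$. (The paper phrases the limit as $|v^\intercal \nabla^2 f(x) v|$ for a unit eigenvector $v$, which is where symmetry of the Hessian enters; your variant via $\sup_{\normI{v}_2=1}\normI{\nabla^2 f(x)v}_2$ does not actually need symmetry at all, since that supremum is the operator $2$-norm by definition---your appeal to symmetry there is harmless but unnecessary.) For sufficiency, the paper argues directly: it uses continuity of $x \mapsto \normI{\nabla^2 f(x)}_2$ to pick a ball where the Hessian norm exceeds its value at the center by at most $\epsilon$, then applies the fundamental theorem of calculus inside that ball to land immediately on Definition \ref{defn-rho-order}. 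You instead apply the fundamental theorem of calculus globally along the segment, substitute the Hessian bound to obtain exactly Definition \ref{defn-rho-integrated}, and then invoke Proposition \ref{result-order-integrated-equivalent} to convert $\rho$-integrated into $\rho$-order continuity. This is legitimate and non-circular, since the Proposition precedes the Lemma and does not depend on it; in effect you outsource the local $\epsilon$-continuity argument to the Proposition's proof (which uses the same trick, applied to $\normI{D(z)}^\rho$ rather than to the Hessian norm). Your route is more modular and avoids duplicating the $\epsilon$-argument; the paper's route is self-contained and yields the $\rho$-order constants without passing through the integrated form, which is marginally more direct if one cares about tracking constants.
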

\begin{proof}
Let $\rho \geq 0$. Suppose $\exists C_0, C_1 \geq 0$ such that $\forall x \in \mathbb{R}^p$, there exists an open ball, $B_x$, containing $x$, for which
$\normI{\nabla f(x_1) - \nabla f(x_2)}_2 \leq (C_0 + C_1\normI{\nabla f(x)}_2^\rho) \normI{ x_1 - x_2 }_2$, $\forall x_1, x_2 \in B_x$.
Since $f$ is twice continuously differentiable, $\nabla^2 f$ is symmetric. Hence, there exists a unit vector $v \in \mathbb{R}^p$ such that
\begin{align}
\normD{ \nabla^2 f(x) }_2 
&= |v^\intercal \nabla^2 f(x) v| = \left\vert \lim_{t\downarrow 0} \frac{v^\intercal (\nabla f(x+tv) - \nabla f(x))}{t} \right\vert \\
&\leq \lim_{t \downarrow 0} \frac{\normD{ \nabla f(x+tv) - \nabla f(x) }_2}{t},
\end{align}
where we have made use of $\normI{v}_2 = 1$.
For $t$ sufficiently small, $x + tv, x \in B_x$ and 
\begin{equation}
\normI{\nabla f(x+tv) - \nabla f(x) }_2 \leq (C_0 + C_1 \normI{\nabla f(x)}_2^\rho)\normI{tv}_2 = (C_0 + C_1 \normI{\nabla f(x)}_2^\rho)t.
\end{equation}
Hence, $\normI{ \nabla^2 f(x) }_2 \leq C_0 + C_1 \normI{ \nabla f(x)}_2^\rho$.

Now, suppose $\exists C_0, C_1 \geq 0$ such that $\normI{ \nabla^2 f(x) }_2 \leq C_0 + C_1 \normI{ \nabla f(x)}_2^\rho$ for all $x\in \mathbb{R}^p$. Let $\epsilon > 0$. By the continuity of $x \mapsto \normI{\nabla^2 f(x)}_2$, for every $x \in \mathbb{R}^p$, there exist an open ball around every $x$, denoted $B_x$, such that $\normI{\nabla^2 f(y) }_2 \leq \normI{\nabla^2 f(x)}_2 + \epsilon$ for all $y \in B_x$. 
Thus, using the preceding observation with the fundamental theorem of calculus, for any $x_1, x_2 \in B_x$, 
\begin{align}
\normD{ \nabla f(x_1) - \nabla f(x_2) }_2 
&= \normD{ \int_0^1 \nabla^2f(x_2 + t (x_1 - x_2) ) (x_1 - x_2) dt}_2 \\
&\leq \normD{x_1 - x_2}_2\int_0^1 \normD{\nabla^2 f(x_2 + t (x_1 - x_2) )}_2 dt \\
&\leq \normD{x_1 - x_2}_2\left( \normD{ \nabla^2 f(x)}_2 + \epsilon \right) \\
&\leq \normD{x_1 - x_2}_2\left( C_0 + C_1 \normD{ \nabla f(x)}_2^\rho + \epsilon  \right).
\end{align}
Hence, $\nabla f(x)$ is $\rho$-order Lipschitz continuous.%
\footnote{Again, we can optimize the choice of constants $C_0, C_1$ if we are interested in a rate-of-converge result. As this is not our focus, we will not do so here.}
\end{proof}

Using this relationship, we can construct functions whose gradient is $\rho$-order Lipschitz continuous ($\rho > 0$), but not globally Lipschitz continuous.
\begin{proposition} \label{result-example-notGlobal-rhoOrder}
Let $\rho > 0$.
There exists a twice continuously differentiable function, $f: \mathbb{R} \to \mathbb{R}$, such that $f'(x)$ is $\rho$-order Lipschitz continuous but not globally Lipschitz continuous. 
\end{proposition}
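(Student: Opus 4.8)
The plan is to reduce the statement to the pointwise derivative criterion of Lemma~\ref{result-equivalence-lipschitz-derivative} and then to exhibit a single explicit function. Since the $f$ I build will be twice continuously differentiable, that lemma tells us $f'$ is $\rho$-order Lipschitz continuous if and only if there are constants $C_0,C_1 \ge 0$ with $|f''(x)| \le C_0 + C_1 |f'(x)|^\rho$ for all $x \in \mathbb{R}$. Moreover, $f'$ is globally Lipschitz continuous precisely when $f''$ is bounded (bounded $f''$ gives a global Lipschitz constant via the fundamental theorem of calculus, and the converse is the first part of the proof of Lemma~\ref{result-equivalence-lipschitz-derivative}). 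So it suffices to build a smooth $f$ whose second derivative is \emph{unbounded} yet grows slowly enough relative to the first derivative that the displayed inequality holds.

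The obvious candidates, the monomials $f(x) = x^n$, do not suffice for every $\rho$: for $f = x^n$ one has $|f''|/|f'|^\rho \sim |x|^{(n-2) - (n-1)\rho}$, which is bounded only when $\rho \ge (n-2)/(n-1)$, and over integers $n \ge 3$ this lower bound stays bounded away from $0$. The fix is to make the growth rate of $f''$ tunable continuously. Given $\rho > 0$, I would first choose an exponent $a > 0$ with $2a < (2a+1)\rho$; this is always possible, since the left side tends to $0$ as $a \downarrow 0$ while the right side stays positive (for $\rho \ge 1$ any $a > 0$ works, and for $\rho < 1$ any $a < \rho/(2(1-\rho))$ works). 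I then define $f$ to be the smooth function with $f(0)=f'(0)=0$ and
\begin{equation}
  f''(x) = (1+x^2)^a .
\end{equation}
Because $a > 0$, we have $f''(x) \to \infty$ as $|x| \to \infty$, so $f''$ is unbounded and hence $f'$ is not globally Lipschitz continuous.

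It then remains to verify the growth inequality. Since $(1+s^2)^a \ge s^{2a}$ for $s \ge 0$ and $f'$ is odd, integrating gives the lower bound $|f'(x)| \ge |x|^{2a+1}/(2a+1)$, hence $|f'(x)|^\rho \ge |x|^{(2a+1)\rho}/(2a+1)^\rho$. For $|x| \ge 1$ one has $f''(x) = (1+x^2)^a \le 2^a |x|^{2a}$, and because $2a \le (2a+1)\rho$ and $|x| \ge 1$ we get $|x|^{2a} \le |x|^{(2a+1)\rho}$; combining these,
\begin{equation}
  |f''(x)| \le 2^a |x|^{2a} \le 2^a (2a+1)^\rho\, |f'(x)|^\rho \qquad (|x| \ge 1).
\end{equation}
On the compact set $\{|x| \le 1\}$ the continuous function $f''$ is bounded by $2^a$, so setting $C_0 = 2^a$ and $C_1 = 2^a (2a+1)^\rho$ yields $|f''(x)| \le C_0 + C_1 |f'(x)|^\rho$ for all $x$. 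By Lemma~\ref{result-equivalence-lipschitz-derivative}, $f'$ is $\rho$-order Lipschitz continuous, which completes the argument.

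The main obstacle is conceptual rather than computational: recognizing that polynomial examples cannot reach arbitrarily small $\rho$, and therefore introducing a construction whose Hessian growth can be dialed down continuously through the parameter $a$. Once $f''(x) = (1+x^2)^a$ is fixed, the remaining estimates are elementary asymptotic comparisons between the growth of $f''$ and a power of $f'$.
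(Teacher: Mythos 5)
Your proof is correct. Structurally it follows the same route as the paper: both arguments reduce the existence claim to the pointwise criterion of Lemma~\ref{result-equivalence-lipschitz-derivative} (together with the fact that global Lipschitz continuity of $f'$ forces $f''$ to be bounded), and then exhibit an explicit function with unbounded second derivative satisfying $|f''(x)| \le C_0 + C_1 |f'(x)|^\rho$ everywhere. Where you genuinely differ is in the witness. The paper splits into two cases: for $\rho \ge 1$ it takes a smoothed version of $e^{|x|}$, and for $\rho \in (0,1)$ a smoothed version of $|x|^{(2-\rho)/(1-\rho)}$ (adapted from Chen et al.), in each case gluing a polynomial onto $\{|x|<1\}$ and checking $C^2$-regularity at the seam. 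You instead prescribe a single one-parameter family via $f''(x) = (1+x^2)^a$ with $f(0)=f'(0)=0$, tuning $a>0$ so that $2a < (2a+1)\rho$; this is automatically $C^\infty$, requires no case distinction on $\rho$ and no piecewise gluing, and your verification (the odd-symmetry and integral lower bound $|f'(x)| \ge |x|^{2a+1}/(2a+1)$, the bound $f''(x) \le 2^a|x|^{2a}$ for $|x|\ge 1$, and boundedness of $f''$ on $[-1,1]$) is complete and correct. What the paper's two-case construction buys in exchange is sharper, essentially extremal growth: its examples have $f'$ growing exponentially when $\rho \ge 1$ and like $|x|^{1/(1-\rho)}$ when $\rho \in (0,1)$, which nearly saturates the $\rho$-order condition, whereas your constraint $a < \rho/(2(1-\rho))$ (for $\rho<1$) keeps your example strictly inside the class. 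Your opening observation that integer monomials cannot reach $\rho < 1/2$, and that one therefore needs a continuously tunable exponent, is a nice motivation that the paper does not make explicit.
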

\begin{proof}
We split the constructions into two parts. For $\rho \geq 1$, consider
\begin{small}
\begin{equation}
f(x) = \begin{cases}
e^{|x|} & |x| \geq 1 \\
\frac{e}{2}x^2 + \frac{e}{2} & |x| < 1,
\end{cases}
~
f'(x) = \begin{cases}
\mathrm{sign}(x) e^{|x|} & |x| \geq 1 \\
ex & |x| < 1,
\end{cases}
~
f''(x) = \begin{cases}
e^{|x|} & |x| \geq 1 \\
e & |x| < 1.
\end{cases}
\end{equation}
\end{small}%
It can be readily checked that $f$ is twice continuously differentiable. Now, $|f''(x)| \to \infty$ as $|x| \to \infty$. Hence, $f'$ is \textit{not} globally Lipschitz continuous by Lemma \ref{result-equivalence-lipschitz-derivative}. Furthermore, for any $\rho \geq 1$,
\begin{equation}
|f''(x)| \leq e + |f'(x)|^\rho.
\end{equation}
Hence $f'$ is $\rho$-order Lipschitz continuous by Lemma \ref{result-equivalence-lipschitz-derivative}.

For $\rho \in (0,1)$, consider a smooth version of the example in \citep[Theorem 1]{chen2023GeneralizedSmooth}:
\begin{small}
\begin{equation}
f(x) = \begin{cases}
\frac{1-\rho}{2-\rho}|x|^{\frac{2-\rho}{1-\rho}} & |x| \geq 1 \\
\frac{\rho}{8(1-\rho)} x^4 + \frac{2-3\rho}{4(1-\rho)} x^2 + \frac{1-\rho}{2-\rho} - \frac{4 - 5\rho}{8(1-\rho)} & |x| < 1,
\end{cases}
\end{equation}
\begin{equation}
f'(x) = \begin{cases}
\mathrm{sign}(x) |x|^{\frac{1}{1-\rho}} & |x| \geq 1 \\
\frac{\rho}{2(1-\rho)}x^3 + \frac{2-3 \rho}{2(1-\rho)} x & |x| < 1,
\end{cases}
~
f''(x) = \begin{cases}
\frac{1}{1-\rho} |x|^{\frac{\rho}{1-\rho}} & |x| \geq 1 \\
\frac{3\rho}{2(1-\rho)}x^2 + \frac{2-3 \rho}{2(1-\rho)} & |x| < 1.
\end{cases}
\end{equation}
\end{small}%
It is easy to check that $f$ is twice continuously differentiable. $f''(x)$ is unbounded and so $f'$ is \textit{not} globally Lipschitz continuous by Lemma \ref{result-equivalence-lipschitz-derivative}. Furthermore,
\begin{equation}
|f''(x)| \leq \frac{1}{1-\rho} + \frac{1}{1-\rho} |f'(x)|^\rho,
\end{equation}
which implies that $f'$ is $\rho$-order Lipschitz continuous by Lemma \ref{result-equivalence-lipschitz-derivative}.
\end{proof}

Finally, we provide a sufficient condition for demonstrating that a function is \textit{not} $\rho$-order/integrated Lipschitz continuous, which will be useful in \S \ref{sec-examples}.
\begin{lemma} \label{result-local-only}
Let $f:\mathbb{R}^p \to \mathbb{R}$ be twice continuously differentiable. Let $\rho \geq 0$. If $\exists \lbrace x^n \rbrace \subset \mathbb{R}^p$ and $\lbrace \kappa_n \rbrace \subset \mathbb{R}_{\geq 0}$ such that
\begin{enumerate}
\item $\lbrace \kappa_n \rbrace$ diverges;
\item $\liminf_n \kappa_n \normI{ \nabla f(x^n)}_2^\rho = \infty$; and
\item $\normI{ \nabla^2 f(x^n)}_2 \geq \kappa_n \normI{ \nabla f(x^n)}_2^\rho$;
\end{enumerate}
then $\nabla f$ is not $\rho$-order Lipschitz continuous for any $C_0, C_1 \geq 0$.
\end{lemma}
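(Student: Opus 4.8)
The plan is to argue by contradiction, leveraging the equivalence established in Lemma~\ref{result-equivalence-lipschitz-derivative}. Suppose, contrary to the claim, that $\nabla f$ were $\rho$-order Lipschitz continuous. Since $f$ is twice continuously differentiable, Lemma~\ref{result-equivalence-lipschitz-derivative} would then furnish constants $C_0, C_1 \geq 0$ such that $\normI{\nabla^2 f(x)}_2 \leq C_0 + C_1 \normI{\nabla f(x)}_2^\rho$ for every $x \in \mathbb{R}^p$. Everything then reduces to showing that no such pair of constants can coexist with hypotheses (1)--(3).

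First I would specialize this global bound to the points $x^n$ and chain it against hypothesis (3): for each $n$,
\begin{equation*}
\kappa_n \normI{\nabla f(x^n)}_2^\rho \leq \normI{\nabla^2 f(x^n)}_2 \leq C_0 + C_1 \normI{\nabla f(x^n)}_2^\rho .
\end{equation*}
Abbreviating $a_n = \normI{\nabla f(x^n)}_2^\rho$, the outer inequality rearranges to $(\kappa_n - C_1) a_n \leq C_0$ for all $n$, an estimate that couples the growth of $\kappa_n$ to the size of $a_n$.

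The final step is to play this coupling against hypotheses (1) and (2). Because $\{ \kappa_n \}$ diverges, infinitely many indices satisfy $\kappa_n \geq 2 C_1$; along any subsequence $\{ n_k \}$ of such indices we have $\kappa_{n_k} - C_1 \geq \kappa_{n_k}/2$, so the rearranged bound gives $\kappa_{n_k} a_{n_k} \leq 2 C_0$. Hence $\liminf_n \kappa_n \normI{\nabla f(x^n)}_2^\rho \leq 2 C_0 < \infty$, which flatly contradicts hypothesis (2) and thereby completes the argument. I expect the only delicate point to be the bookkeeping around the fixed constant $C_1$: the rearranged inequality controls the product $\kappa_n a_n$ only once $\kappa_n$ has overtaken $C_1$, so the three hypotheses must be used in concert---(3) to introduce $\kappa_n$, (1) to dominate $C_1$ along a subsequence, and (2) to reach the contradiction. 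I note as well that the argument is uniform in $\rho$ (the case $\rho = 0$ included) and requires no separate treatment of $\nabla f(x^n) = \bfz$, since hypothesis (2) already forces $a_n > 0$ for all large $n$.
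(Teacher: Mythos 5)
Your proposal is correct and follows essentially the same route as the paper: a contradiction via Lemma~\ref{result-equivalence-lipschitz-derivative}, chaining hypothesis (3) against the resulting global Hessian bound at the points $x^n$, and rearranging to $(\kappa_n - C_1)\normI{\nabla f(x^n)}_2^\rho \leq C_0$, which hypotheses (1) and (2) render impossible. Your subsequence bookkeeping with $\kappa_n \geq 2C_1$ merely spells out the step the paper states tersely as ``the left hand side diverges,'' so the arguments coincide in substance.
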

\begin{proof}
Fix $\rho \geq 0$. For a contradiction by Lemma \ref{result-equivalence-lipschitz-derivative}, suppose $\exists C_0, C_1 \geq 0$ such that, $\forall x \in \mathbb{R}^p$, 
$\normI{\nabla^2 f(x)}_2 \leq C_0 + C_1\normI{\nabla f(x)}_2^\rho$.
By the hypotheses of the statement, $\exists \lbrace x^n \rbrace \subset\mathbb{R}^p$ and a diverging sequence $\lbrace \kappa_n \rbrace \subset \mathbb{R}_{\geq 0}$
such that $\liminf_n \kappa_n\normI{\nabla f(x^n)}_2 = \infty$ and $\normI{\nabla^2 f(x^n)}_2 \geq \kappa_n \normI{ \nabla f(x^n)}_2^\rho$. Using this with the contradiction supposition, $\kappa_n \normI{\nabla f(x^n)}_2^\rho \leq \normI{ \nabla^2 f(x^n)}_2 \leq C_0 + C_1 \normI{ \nabla f(x^n) }_2^\rho$. Rearranging, $(\kappa_n - C_1) \normI{\nabla f(x^n)}_2^\rho \leq C_0$. The left hand side of the preceding inequality diverges to infinity by the hypotheses, yet the right remains bounded. Hence, we have a contradiction.
\end{proof}
\begin{remark}
We note that the choice of norms here is arbitrary as all norms are equivalent in finite-dimensional vector spaces. Below, we will use other norms to simplify the calculations.
\end{remark}

\section{Applicability to Deep Linear Neural Networks} \label{sec-examples}
We will begin by showing that an objective function for training a linear one-dimensional feed forward network described in \citep{patel2022Global} with three hidden layers for binary classification does not have a gradient that is $\rho$-order/integrated Lipschitz continuous for any $\rho \geq 0$. We then show that this result extends to training a multi-dimensional linear feed forward network of arbitrary depth.

\subsection{A Three Hidden Layer Neural Network}

Consider using a feed forward neural network to predict a binary label ($y$) from a one-dimensional input ($z$). Specifically, we consider a feed forward network with three hidden layers consisting of a one-dimensional weight ($x_1, x_2, x_3$) and identity activation function ($\sigma$), and a sigmoid output neuron ($\psi$) with a one-dimensional weight ($x_4$) as shown in Figure \ref{figure-ffn}. Note, this network does not have bias terms, which we will consider in the subsequent example.

\begin{figure}[hb]
\centering 
\caption{A diagram of a simple feed forward network with three hidden layers and an output layer for binary classification.}
\label{figure-ffn}
\begin{tikzpicture}
\node[circle,minimum width=0.5cm,fill=red!20,draw=black,dashed] (feat) at (0,0) {$z$};
\node[circle,minimum width=0.75cm,fill=blue!20,draw=black] (ly1) at (2,0) {$\sigma$};
\node[circle,minimum width=0.75cm,fill=blue!20,draw=black] (ly2) at (4,0) {$\sigma$};
\node[circle,minimum width=0.75cm,fill=blue!20,draw=black] (ly3) at (6,0) {$\sigma$};
\node[circle,minimum width=0.5cm,fill=green!20,draw=black,thick] (out) at (8,0) {$\varphi$};
\node (y) at (9,0) {$\hat y$};

\draw[->,thick] (feat) to [out=45,in=135] node[above] {$x_1\times$} (ly1);
\draw[->,thick] (ly1) to [out=45,in=135] node[above] {$x_2\times$} (ly2);
\draw[->,thick] (ly2) to [out=45,in=135] node[above] {$x_3\times$} (ly3);
\draw[->,thick] (ly3) to [out=45,in=135] node[above] {$x_4 \times$} (out);
\draw[->,thick] (out) to (y);
\end{tikzpicture}
\end{figure}
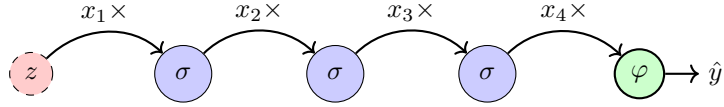

For the data, suppose we have input-output pairs, $(z,y)$, given by $(0,0)$ and $(1,1)$, where each occurs with equal probability. If we train such a network using the standard binary cross-entropy loss, then the resulting risk minimization problem's objective function is (up to a scaling and shift)
\begin{equation} \label{eqn-3-layer}
f(x) = \log(  1 + \exp(-x_1 x_2 x_3 x_4)),
\end{equation}
where $x = (x_1, x_2, x_3, x_4)$ \citep[see][for details]{patel2022Global}.

Then, 
\begin{equation}
\nabla f(x) = \frac{-1}{1 + \exp(x_1x_2x_3x_4)} 
\begin{bmatrix}
x_2 x_3 x_4 \\
x_1 x_3 x_4 \\
x_1 x_2 x_4 \\
x_1 x_2 x_3
\end{bmatrix},
\end{equation}
and
\begin{small}
\begin{equation}
\begin{aligned}
\nabla^2 f(x) &= \frac{-1}{1 + \exp(x_1 x_2 x_3 x_4)}
\begin{bmatrix}
0 & x_3 x_4 & x_2x_4 & x_2 x_3 \\
x_3x_4 & 0 & x_1 x_3 & x_1 x_3 \\
x_2 x_4 & x_1 x_4 & 0 & x_1 x_2 \\
x_2 x_3 & x_1 x_3 & x_1 x_2 & 0 
\end{bmatrix} \\
&+ \frac{\exp(x_1 x_2 x_3 x_4)}{[1 + \exp(x_1 x_2 x_3 x_4)]^2}
\begin{bmatrix}
x_2 x_3 x_4 \\
x_1 x_3 x_4 \\
x_1 x_2 x_4 \\
x_1 x_2 x_3
\end{bmatrix}
\begin{bmatrix}
x_2 x_3 x_4 \\
x_1 x_3 x_4 \\
x_1 x_2 x_4 \\
x_1 x_2 x_3
\end{bmatrix}^\intercal.
\end{aligned}
\end{equation}
\end{small}

\begin{lemma}
Let $f:\mathbb{R}^4 \to \mathbb{R}$ be defined as in (\ref{eqn-3-layer}). Then, $\nabla f: \mathbb{R}^4 \to \mathbb{R}^4$ is locally Lipschitz continuous.
\end{lemma}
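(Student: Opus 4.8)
The plan is to derive local Lipschitz continuity of $\nabla f$ directly from the fact that $f$ is twice continuously differentiable, rather than by way of Definition \ref{defn-rho-order}; the latter route is unavailable here precisely because the main result of this section is that $\nabla f$ \emph{fails} to be $\rho$-order Lipschitz continuous for any $\rho \geq 0$. First I would observe that $f$ in (\ref{eqn-3-layer}) is $C^\infty$ on all of $\mathbb{R}^4$: it is the composition of the polynomial $x \mapsto -x_1 x_2 x_3 x_4$, the entire function $\exp$, and the map $u \mapsto \log(1+u)$, and since $1 + \exp(-x_1 x_2 x_3 x_4) > 1$ everywhere, the argument of the logarithm never approaches the singularity at $0$. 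Consequently $\nabla^2 f$, displayed above, is continuous on $\mathbb{R}^4$.

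Next I would fix an arbitrary $x \in \mathbb{R}^4$ and take $B_x$ to be the open unit ball centered at $x$. Its closure is compact, so by the extreme value theorem the continuous map $y \mapsto \normI{\nabla^2 f(y)}_2$ attains a finite maximum $M_x \geq 0$ on $\overline{B_x}$. Because $B_x$ is convex, for any $x_1, x_2 \in B_x$ the segment $t \mapsto x_2 + t(x_1 - x_2)$ stays inside $B_x$, so the fundamental theorem of calculus together with this uniform bound yields
\begin{equation}
\normD{\nabla f(x_1) - \nabla f(x_2)}_2 = \normD{\int_0^1 \nabla^2 f\big(x_2 + t(x_1 - x_2)\big)(x_1 - x_2)\, dt}_2 \leq M_x \normD{x_1 - x_2}_2.
\end{equation}
This is exactly Definition \ref{def-local} with $C_{B_x} = M_x$, so the claim follows.

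The only real obstacle is justifying the smoothness claim cleanly, since once $\nabla^2 f$ is known to be continuous the remainder is the standard compactness argument showing that every $C^2$ function has a locally Lipschitz gradient. Here smoothness is immediate from the positivity of $1 + \exp(-x_1 x_2 x_3 x_4)$, so no case analysis is needed; one could even invoke the general principle that a $C^2$ function has a locally Lipschitz gradient and omit the explicit estimate. I would nonetheless include the short fundamental-theorem-of-calculus bound above to make the local constant $C_{B_x}$ explicit and to contrast with the later demonstration that no single pair $C_0, C_1$ can control this constant globally in the sense of Definition \ref{defn-rho-order}.
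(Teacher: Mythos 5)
Your proof is correct and takes essentially the same route as the paper: both reduce the claim to the continuity of $\nabla^2 f$ on $\mathbb{R}^4$, which the paper then converts into local Lipschitz continuity of $\nabla f$ by citing an external lemma, while you prove that conversion inline via the standard compactness, extreme-value, and fundamental-theorem-of-calculus argument. The only difference is that you make the local constant $C_{B_x} = M_x$ explicit rather than delegating to the cited result, which is a fine (if slightly longer) presentation of the same idea.
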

\begin{proof}
$\nabla^2 f(x)$ is continuous for all $x \in \mathbb{R}^4$. Therefore, $\nabla f$ is locally Lipschitz continuous \citep[e.g.,][Lemma SM2.1]{patel2024Gradient}.
\end{proof}

To show that $\nabla f$ is \textit{not} $\rho$-order/integrated Lipschitz continuous, we will make use of Lemma \ref{result-local-only}. To begin, we compute the norms of the gradient and Hessian. 
\begin{equation}
\normD{\nabla f(x)}_2 = \frac{\sqrt{(x_1x_2x_3)^2 + (x_1 x_2 x_4)^2 + (x_1 x_3 x_4)^2 + (x_2 x_3 x_4)^2}}{1+\exp(x_1x_2x_3x_4)}.
\end{equation}

By the equivalence of the operator norm and the $(1,1)$ entry-wise norm ($\normI{ \cdot }_2 \geq 0.5 \normI{ \cdot }_{(1,1)}$ for a $4 \times 4$ matrix), we can use the $(1,1)$ entry-wise norm in our analysis. We lower bound the $(1,1)$ entry-wise norm using the matrix's $(1,1)$ entry, which yields
\begin{equation}
\begin{aligned}
\normD{\nabla^2 f(x)}_{(1,1)} &\geq \frac{\exp(x_1x_2x_3x_4)}{(1 + \exp(x_1x_2x_3x_4))^2}(x_2 x_3 x_4)^2.
\end{aligned}
\end{equation}

\begin{proposition} \label{result-3-layer}
Let $f:\mathbb{R}^4 \to \mathbb{R}$ be as in (\ref{eqn-3-layer}). $\forall \rho \geq 0$, $\nabla f(x)$ is neither $\rho$-integrated Lipschitz continuous nor $\rho$-order Lipschitz continuous.
\end{proposition}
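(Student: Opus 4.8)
The plan is to invoke Lemma~\ref{result-local-only}, since by Proposition~\ref{result-order-integrated-equivalent} it suffices to rule out $\rho$-order Lipschitz continuity and the $\rho$-integrated case then follows automatically. I would fix an arbitrary $\rho \geq 0$ and exhibit sequences $\lbrace x^n \rbrace \subset \mathbb{R}^4$ and $\lbrace \kappa_n \rbrace \subset \mathbb{R}_{\geq 0}$ meeting the three hypotheses of that lemma. The guiding principle is to choose $\lbrace x^n \rbrace$ so that $\normI{\nabla f(x^n)}_2$ stays bounded away from both $0$ and $\infty$ while $\normI{\nabla^2 f(x^n)}_2 \to \infty$. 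Once such a sequence is in hand, setting $\kappa_n = \normI{\nabla^2 f(x^n)}_2 / \normI{\nabla f(x^n)}_2^\rho$ verifies all three conditions at once: condition~3 holds with equality; condition~1 holds because the numerator diverges while the denominator stays bounded; and condition~2 holds because $\kappa_n \normI{\nabla f(x^n)}_2^\rho = \normI{\nabla^2 f(x^n)}_2 \to \infty$.

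The construction I would use sets $x_2^n = x_3^n = x_4^n = n$ and chooses $x_1^n = 3(\log n)/n^3$, so that $\exp(x_1^n x_2^n x_3^n x_4^n) = n^3$. This choice is engineered so that the exponential factor in the denominator of $\nabla f$ exactly matches the polynomial growth of its numerator. Indeed, the dominant product $x_2^n x_3^n x_4^n = n^3$ makes the numerator of $\normI{\nabla f(x^n)}_2$ asymptotic to $n^3$ (the three remaining triple products are of order $(\log n)/n$ and are negligible), so that $\normI{\nabla f(x^n)}_2 \sim n^3/(1 + n^3) \to 1$. At the same time, the lower bound on $\normI{\nabla^2 f(x^n)}_2$ recorded before the statement gives $\normI{\nabla^2 f(x^n)}_2 \geq \frac{0.5\, n^3 (n^3)^2}{(1+n^3)^2} \sim 0.5\, n^3 \to \infty$. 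These two asymptotics are exactly what the guiding principle requires, and they hold simultaneously for every $\rho \geq 0$.

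The main obstacle is handling all $\rho \geq 0$ uniformly, and in particular large $\rho$. A more naive attempt, such as taking $x_1^n = 0$ and letting the other coordinates grow, makes $\normI{\nabla f(x^n)}_2$ grow like a power of $n$, so that the ratio $\normI{\nabla^2 f(x^n)}_2/\normI{\nabla f(x^n)}_2^\rho$ scales like $n^{2-\rho}$ and diverges only when $\rho < 2$. The resolution is precisely the balancing described above: by tuning $x_1^n$ through the exponential term so that $\normI{\nabla f(x^n)}_2$ is bounded, the argument decouples from $\rho$ entirely, since then $\normI{\nabla f(x^n)}_2^\rho$ is bounded above and below by positive constants for each fixed $\rho$. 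The remaining work is the routine verification of the two asymptotic estimates, after which Lemma~\ref{result-local-only} together with Proposition~\ref{result-order-integrated-equivalent} delivers the conclusion for both smoothness conditions.
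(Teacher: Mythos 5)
Your proof is correct and takes essentially the same route as the paper: both reduce the $\rho$-integrated case to the $\rho$-order case via Proposition~\ref{result-order-integrated-equivalent}, then invoke Lemma~\ref{result-local-only} by exhibiting an explicit sequence, using the same $(1,1)$-entry lower bound on the Hessian. The only difference is the witness sequence: the paper takes $x^n = (n^{-1},1,1,n\log(n))$, so $\normI{\nabla f(x^n)}_2$ grows like $\log(n)$ and divergence of $\kappa_n$ rests on $n$ dominating $\log(n)^\rho$, whereas your choice $x^n = (3\log(n)/n^3, n, n, n)$ keeps $\normI{\nabla f(x^n)}_2$ bounded near $1$, which decouples the estimates from $\rho$ and lets you take $\kappa_n$ to be the exact ratio $\normI{\nabla^2 f(x^n)}_2/\normI{\nabla f(x^n)}_2^\rho$ --- a valid, and arguably cleaner, balancing.
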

\begin{proof}
We now show that $\nabla f$ is \textit{not} $\rho$-order Lipschitz continuous for any $\rho \geq 0$.
Let $\lbrace x^n = (n^{-1}, 1, 1, n\log(n)) \rbrace$ and 
\begin{equation}
\kappa_n = \frac{n \log(n)^{2}}{4 (2\log(n))^\rho}.
\end{equation}
For this choice of parameters, for $n > 1$,
\begin{equation}
\normD{\nabla f(x^n)}_2 = \frac{\sqrt{ n^{-2} + 2 \log(n)^2 + n^2 \log(n)^2}}{1 + n} \in [ 0.5\log(n), 2 \log(n)];
\end{equation}
and, for all $n \in \mathbb{N}$,
\begin{equation}
\normD{\nabla^2 f(x^n)}_{(1,1)} \geq \frac{n}{(1 + n)^2}n^2 \log(n)^2 \geq \frac{n \log(n)^2}{4}.
\end{equation}

We check the hypotheses of Lemma \ref{result-local-only} (for $n > 1$).
\begin{enumerate}
\item $\lim_n \kappa_n = \infty$.
\item $\kappa_n \normI{ \nabla f(x^n) }_2^\rho \geq  \kappa_n 0.5^\rho \log(n)^\rho = n\log(n)^2 / 4^{1+\rho}$. Hence, $\lbrace \kappa_n \normI{ \nabla f(x^n) }_2^\rho \rbrace$ diverges.
\item $ \normI{ \nabla^2 f(x^n) }_{(1,1)} \geq n\log(n)^2/4 = \kappa_n 2^\rho \log(n)^\rho \geq \kappa_n \normI{ \nabla f(x^n) }_2^\rho$.
\end{enumerate}
Thus, $\nabla f$ is not $\rho$-order Lipschitz continuous for any $\rho \geq 0$. 
\end{proof}

\subsection{An Arbitrary-Depth Neural Network}

Consider now using a feed forward neural network to predict a binary label ($y$) from an input $z \in \mathbb{R}^d$. As shown in Figure \ref{figure-dffn}, we use a network of $N-1$ hidden layers with identity activation functions ($\sigma$), arbitrary weights ($W_1,\ldots,W_{N-1} \in \mathbb{R}^{d \times d}$) and arbitrary biases ($b_1,\ldots,b_{N-1} \in \mathbb{R}^d$); followed by a sigmoid output layer with a weight ($W_N \in \mathbb{R}^{1 \times d}$) and bias ($b_N \in \mathbb{R}$). 

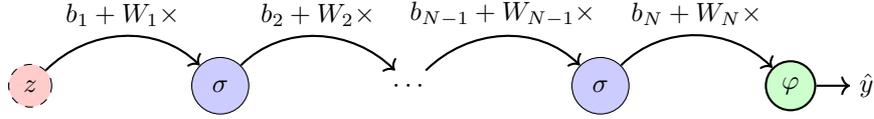
\begin{figure}[hb]
\centering 
\caption{A diagram of an arbitrary-depth, multi-dimensional feed forward network for binary classification.}
\label{figure-dffn}
\begin{tikzpicture}
\node[circle,minimum width=0.5cm,fill=red!20,draw=black,dashed] (feat) at (0,0) {$z$};
\node[circle,minimum width=0.75cm,fill=blue!20,draw=black] (ly1) at (2.5,0) {$\sigma$};
\node[] (ly2) at (5,0) {$\cdots$};
\node[circle,minimum width=0.75cm,fill=blue!20,draw=black] (ly3) at (7.5,0) {$\sigma$};
\node[circle,minimum width=0.5cm,fill=green!20,draw=black,thick] (out) at (10,0) {$\varphi$};
\node (y) at (11,0) {$\hat y$};

\draw[->,thick] (feat) to [out=45,in=135] node[above] {$b_1 + W_1\times$} (ly1);
\draw[->,thick] (ly1) to [out=45,in=135] node[above] {$b_2 + W_2\times$} (ly2);
\draw[->,thick] (ly2) to [out=45,in=135] node[above] {$b_{N-1} + W_{N-1} \times $} (ly3);
\draw[->,thick] (ly3) to [out=45,in=135] node[above] {$b_{N} + W_N \times$} (out);
\draw[->,thick] (out) to (y);
\end{tikzpicture}
\end{figure}

For this network, we denote the argument for the sigmoid output layer, $\psi$, by
\begin{small}
\begin{align}
\eta_z &= W_{N} (W_{N-1} (\cdots(W_2 (W_1 z + b_1) + b_2) \cdots)   + b_{N-1}) + b_N \\
	    &= \left(\prod_{i=1}^{N} W_i \right) z + \sum_{j=1}^{N} \left( \prod_{i=j+1}^{N} W_i \right) b_{j},
\end{align}
\end{small}%
where we take the ordering of the products to be from the largest index term on the left to the smallest index term on the right; and we use the convention that if the upper limit of the product is smaller than the lower limit of the product, then the product is set to the identity.

For the data, suppose we have two input-output pairs each occurring with probability $1/2$. For the first input-output pair, $z$ is $\mathbf{0} \in \mathbb{R}^d$ and $y = 0$; for the second input-output pair, $z$ is the first standard basis vector in $\mathbb{R}^d$, denoted $\mathbf{s}_1$, and $y = 1$. If we train the network on this data with binary cross entropy loss, then the objective function of the expected risk minimization problem is
\begin{small}
\begin{equation} \label{eqn-N-layer}
\begin{aligned}
&f(W_1,\ldots,W_{N}, b_1,\ldots,b_N) = \frac{1}{2}\left[ \eta_\bfz + \log( 1 + \exp(-\eta_\bfz)) \right] + \frac{1}{2}\log( 1 + \exp(-\eta_{\bfs_1})).
\end{aligned}
\end{equation}
\end{small}

We can also compute the gradient and Hessian of this function by the chain rule. First, note that $\eta_z$ is composed of twice continuously differentiable functions, which implies that $\eta_z$ is twice continuously differentiable. Then, by the chain rule,
\begin{small}
\begin{equation} \label{eqn-N-layer-partial-W}
\frac{\partial f}{\partial W_{\ell}(k,r)} = \frac{1}{2} \frac{1}{1+\exp(-\eta_\bfz)} \frac{\partial \eta_\bfz}{\partial W_{\ell}(k,r)} - \frac{1}{2} \frac{1}{1 + \exp(\eta_{\bfs_1})} \frac{\partial \eta_{\bfs_1}}{\partial W_\ell(k,r)},~\mathrm{and}
\end{equation}
\begin{equation} \label{eqn-N-layer-partial-b}
\frac{\partial f}{\partial b_{\ell}(r)}  = \frac{1}{2} \frac{1}{1+\exp(-\eta_\bfz)} \frac{\partial \eta_\bfz}{\partial b_{\ell}(r)} - \frac{1}{2} \frac{1}{1 + \exp(\eta_{\bfs_1})} \frac{\partial \eta_{\bfs_1}}{\partial b_{\ell}(r)},
\end{equation}
\end{small}%
where $W_{\ell}(k,r)$ is the $(k,r)$ entry of $W_{\ell}$ and $b_{\ell}(r)$ is the $r$ entry of $b_{\ell}$; and $W_{N}(k,r)$ always has $k=1$ since $W_N \in \mathbb{R}^{1 \times d}$.
Furthermore,
\begin{footnotesize}
\begin{equation}
\begin{aligned}
&\frac{\partial^2 f}{\partial W_{\ell_1}(k_1,r_1) \partial W_{\ell_2}(k_2,r_2)} \\
&= \frac{1}{2}\left[ \frac{1}{1+\exp(-\eta_\bfz)} \frac{\partial^2 \eta_\bfz}{\partial W_{\ell_1}(k_1,r_1) W_{\ell_2}(k_2,r_2)} + \frac{\exp(-\eta_\bfz)}{(1 + \exp(-\eta_\bfz))^2} \frac{\partial \eta_\bfz}{\partial W_{\ell_1}(k_1,r_1)}\frac{\partial \eta_\bfz}{\partial W_{\ell_2}(k_2,r_2)}  \right]  \\
&- \frac{1}{2}\left[\frac{1}{1 + \exp(\eta_{\bfs_1})} \frac{\partial^2 \eta_{\bfs_1}}{\partial W_{\ell_1}(k_1,r_1) W_{\ell_2}(k_2,r_2)} 
- \frac{\exp(\eta_{\bfs_1})}{(1 + \exp(\eta_{\bfs_1}))^2} \frac{\partial \eta_{\bfs_1}}{\partial W_{\ell_1}(k_1,r_1)}\frac{\partial \eta_{\bfs_1}}{\partial W_{\ell_2}(k_2,r_2)} \right],
\end{aligned}
\end{equation}
\end{footnotesize}%
and second derivatives with respect to $b_\ell(r)$ can be computed similarly. With these calculations, the next result is straightforward.
\begin{proposition}
Let $f:\mathbb{R}^{Nd^2 + Nd} \to \mathbb{R}$ be defined as in (\ref{eqn-N-layer}). Then, $\nabla f: \mathbb{R}^{Nd^2 + Nd} \to \mathbb{R}^{Nd^2 + Nd}$ is locally Lipschitz continuous.
\end{proposition}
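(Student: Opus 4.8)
The plan is to reproduce, at the level of the arbitrary-depth network, exactly the reasoning used for the three-hidden-layer case: a function whose Hessian exists and is continuous on all of its domain has a locally Lipschitz continuous gradient, so it suffices to show that $f$ is twice continuously differentiable on $\mathbb{R}^{Nd^2+Nd}$ and then invoke \citep[Lemma SM2.1]{patel2024Gradient}.

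First I would argue that, for each fixed input $z$, the pre-activation $\eta_z$ is twice continuously differentiable in the parameters $(W_1,\ldots,W_N,b_1,\ldots,b_N)$. From the closed form $\eta_z = \left(\prod_{i=1}^{N} W_i\right) z + \sum_{j=1}^{N} \left(\prod_{i=j+1}^{N} W_i\right) b_j$, each component of $\eta_z$ is a polynomial in the entries of the $W_i$ and $b_j$, and polynomials are $C^\infty$; in particular, $\eta_\bfz$ and $\eta_{\bfs_1}$ are twice continuously differentiable on the whole parameter space.

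Next I would note that the outer scalar maps $t \mapsto \log(1+\exp(-t))$ and $t \mapsto t + \log(1+\exp(-t))$ are $C^\infty$ on $\mathbb{R}$, since $1+\exp(-t) > 0$ everywhere and hence no denominator appearing in their derivatives can vanish. Because $f$ in (\ref{eqn-N-layer}) is, up to the factor $1/2$, a sum of these smooth scalar maps composed with the twice continuously differentiable functions $\eta_\bfz$ and $\eta_{\bfs_1}$, the chain rule guarantees that $f$ is itself twice continuously differentiable; the explicit partial derivatives in (\ref{eqn-N-layer-partial-W})--(\ref{eqn-N-layer-partial-b}) together with the displayed second-derivative formula exhibit every entry of $\nabla^2 f$ as a continuous function of the parameters, confirming continuity of $\nabla^2 f$ on all of $\mathbb{R}^{Nd^2+Nd}$.

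Finally, with $\nabla^2 f$ continuous everywhere, the cited result \citep[Lemma SM2.1]{patel2024Gradient} immediately yields that $\nabla f$ is locally Lipschitz continuous. I do not expect any genuine obstacle: every building block is a composition of smooth functions, the only point requiring a word is that the sigmoid-type denominators never vanish (which holds since $1+\exp(\cdot) > 0$), and the argument is identical in spirit to the three-hidden-layer lemma, merely carried out over the larger parameter space $\mathbb{R}^{Nd^2+Nd}$.
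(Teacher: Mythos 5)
Your proposal is correct and follows essentially the same route as the paper: the paper establishes that $\eta_z$ (and hence $f$) is twice continuously differentiable via the chain rule, exhibits the partial derivatives, and then---exactly as in the three-hidden-layer lemma---concludes local Lipschitz continuity of $\nabla f$ from the continuity of $\nabla^2 f$ by invoking \citep[Lemma SM2.1]{patel2024Gradient}. Your write-up is in fact slightly more explicit than the paper's, which leaves this proof as a one-line remark (``With these calculations, the next result is straightforward'').
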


We now show that $\nabla f$ is \textit{not} $\rho$-order/integrated Lipschitz continuous. We will do so by using a particular sequence akin to the one constructed in Proposition \ref{result-3-layer}.

\begin{proposition}
Let $f:\mathbb{R}^{Nd^2 + Nd} \to \mathbb{R}$ be as in (\ref{eqn-N-layer}) with $N \geq 4$ and $d \in \mathbb{N}$. $\forall \rho \geq 0$, $\nabla f: \mathbb{R}^{Nd^2 + Nd} \to \mathbb{R}^{Nd^2 + Nd}$ is neither $\rho$-integrated Lipschitz continuous nor $\rho$-ordered Lipschitz continuous.
\end{proposition}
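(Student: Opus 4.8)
The plan is to apply Lemma \ref{result-local-only} directly: I will exhibit a sequence of parameters $\{x^n\} \subset \mathbb{R}^{Nd^2+Nd}$ and scalars $\{\kappa_n\}$ satisfying its three hypotheses, which shows $\nabla f$ is not $\rho$-order Lipschitz continuous for any $\rho \geq 0$; the failure of $\rho$-integrated Lipschitz continuity then follows immediately from the equivalence in Proposition \ref{result-order-integrated-equivalent}. As in Proposition \ref{result-3-layer}, the entire difficulty is to choose $x^n$ so that $\normI{\nabla^2 f(x^n)}$ grows polynomially in $n$ while $\normI{\nabla f(x^n)}$ grows only polylogarithmically, so that the ratio $\normI{\nabla^2 f(x^n)}/\normI{\nabla f(x^n)}^\rho$ diverges for every fixed $\rho$.

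First I would collapse the network onto a one-dimensional slice. Set every bias to $\bfz$ and take each weight matrix rank one along the first coordinate: $W_\ell = c_\ell \bfs_1 \bfs_1^\intercal$ for $\ell < N$ and $W_N = c_N \bfs_1^\intercal$, with all remaining entries zero. Then $\eta_\bfz = 0$ and $\eta_{\bfs_1} = \prod_{i=1}^N c_i =: P$, so $f$ restricted to this slice equals $\tfrac12 \log(1+\exp(-P))$ up to an additive constant, matching the three-hidden-layer objective (\ref{eqn-3-layer}) up to scaling and a shift when $N=4$. Specializing (\ref{eqn-N-layer-partial-W}) and (\ref{eqn-N-layer-partial-b}) to this slice, I would verify that the rank-one structure kills all off-diagonal weight derivatives, leaving only $\partial f/\partial W_\ell(1,1) = -\tfrac12 (1+e^P)^{-1}\prod_{i\neq\ell} c_i$ and only the bias derivatives $\partial f/\partial b_\ell(1) = \tfrac12 \big(\prod_{i=\ell+1}^N c_i\big)\big(\tfrac12 - (1+e^P)^{-1}\big)$ (the suffix product being $1$ for $\ell = N$), and that the diagonal Hessian entry is $\partial^2 f/\partial W_\ell(1,1)^2 = \tfrac12 e^P (1+e^P)^{-2}\big(\prod_{i\neq\ell} c_i\big)^2$.

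The crucial adaptation from Proposition \ref{result-3-layer} concerns where to place the large weight. In the bias-free three-layer example the exploding Hessian direction came from a large \emph{last} weight; but here the bias derivatives involve the suffix products $\prod_{i=\ell+1}^N c_i$, so a large last weight would force $\normI{\nabla f}$ to grow polynomially and ruin the argument. Instead I would take $c_1 = n\log n$, $c_N = 1/n$, and $c_2 = \cdots = c_{N-1} = 1$, so that $P = \log n$. Then every suffix product is at most $1$, hence every bias derivative is $O(1)$; every weight derivative is $O(\log n)$, with the $\ell=N$ term equal to $\tfrac12 (1+n)^{-1} n\log n = \Theta(\log n)$, giving $\normI{\nabla f(x^n)}_2 = \Theta(\log n)$. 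Meanwhile the $\ell = N$ Hessian entry equals $\tfrac12\, n (1+n)^{-2}(n\log n)^2 = \Theta(n\log^2 n)$, so that $\normI{\nabla^2 f(x^n)}_2 \geq |\partial^2 f/\partial W_N(1,1)^2| = \Theta(n\log^2 n)$ since the operator norm dominates any diagonal entry.

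Finally I would set $\kappa_n$ proportional to the Hessian lower bound divided by the $\rho$-th power of the gradient upper bound, i.e. $\kappa_n \asymp n\log^2 n / (\log n)^\rho$, mirroring Proposition \ref{result-3-layer}. Because $n$ grows polynomially while $(\log n)^\rho$ grows subpolynomially, $\kappa_n \to \infty$ and $\kappa_n \normI{\nabla f(x^n)}_2^\rho \to \infty$ for every fixed $\rho \geq 0$, while $\normI{\nabla^2 f(x^n)}_2 \geq \kappa_n \normI{\nabla f(x^n)}_2^\rho$ holds by construction, so all three hypotheses of Lemma \ref{result-local-only} are met. The main obstacle is the bookkeeping of the full gradient along the slice, and specifically the observation that the bias terms absent from the three-layer example force the large weight into the first layer rather than the last; once that slice reduction and the placement are correct, the remaining estimates are routine power-counting in $n$.
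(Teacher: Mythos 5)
Your proposal is correct and follows essentially the same route as the paper: zero biases, rank-one weights supported on the $(1,1)$ entry with one factor $n\log n$ and one factor $n^{-1}$ (the rest ones), the Hessian lower-bounded by the single entry $\partial^2 f/\partial W_N(1,1)^2 = \Theta(n\log^2 n)$, and Lemma \ref{result-local-only} applied with $\kappa_n \asymp n\log^2 n/(\log n)^\rho$, with Proposition \ref{result-order-integrated-equivalent} disposing of the $\rho$-integrated case. The only deviation is cosmetic: you place the large weight in layer $1$ whereas the paper places it in layer $N-1$; either placement keeps every bias suffix product $\prod_{i=\ell+1}^N c_i$ at most $O(\log n)$, hence the full gradient norm $\Theta(\log n)$, so the same power-counting goes through.
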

\begin{proof}
For $k,r \in \lbrace 1,\ldots, d \rbrace$,
let $E_{kr} \in \mathbb{R}^{d \times d}$ such that all of $E_{kr}$'s entries are zero except $E_{kr}(k,r) = 1$. 
Since the first standard basis vector of $\mathbb{R}^d$ is denoted $\bfs_1$, we let $\bfs_r$ denote the $r^\mathrm{th}$ standard basis vector for $r=2,\ldots,d$.

We now consider the sequence of points $\lbrace x^n \rbrace \subset \mathbb{R}^{Nd^2 + Nd}$ corresponding to $(W_1^n, W_2^n,\ldots,W_N^n, b_1^n,\ldots, b_N^n)$ where $W_1^n = W_2^n = \cdots W_{N-3}^n = W_{N-2}^n = E_{11}$, $W_{N-1}^n = n\log (n) E_{11}$, $W_{N}^n = n^{-1} \bfs_1^\intercal$, and $b_1^n = b_2^n = \cdots = b_N^n = \bfz$. Note, this selection is analogous to the selection in Proposition \ref{result-3-layer}.

To compute the gradient, note
\begin{footnotesize}
\begin{equation}
\begin{aligned}
\frac{\partial \eta_z}{\partial W_\ell(k,r)}
= \begin{cases}
\left( \prod_{i=\ell+1}^N W_i \right) E_{kr} \left( \prod_{i=1}^{\ell-1} W_i \right) z 
+ \sum_{j=1}^{\ell-1} \left( \prod_{i=\ell+1}^N W_i \right) E_{kr} \left( \prod_{i=j+1}^{\ell-1} W_i \right) b_j & \ell < N \\
\mathbf{s}_r^\intercal \left( \prod_{i=1}^{N-1} W_i \right) z + \sum_{j=1}^{N-1} \mathbf{s}_r^\intercal \left( \prod_{i=j+1}^{N-1} W_i \right) b_j & \ell = N,
\end{cases}
\end{aligned}
\end{equation}
\end{footnotesize}%
where we use the same convention for the products specified above.
For our selection of $x^n$,
\begin{small}
\begin{equation}
\frac{\partial \eta_z}{\partial W_\ell(k,r)}
= \begin{cases}
\log(n) \bfs_1^\intercal  E_{kr}E_{11}z & \ell < N-1 \\
n^{-1} \bfs_1^\intercal E_{kr} E_{11} z & \ell = N-1 \\
n \log(n) \bfs_r^\intercal E_{11} z & \ell = N
\end{cases} 
= \begin{cases}
\log(n) \bfs_1^\intercal z & \ell < N - 1, k=r=1 \\
n^{-1} \bfs_1^\intercal z & \ell = N-1, k=r=1 \\
n \log(n) \bfs_1^\intercal z & \ell = N, r=1 \\
0 & \text{otherwise},
\end{cases}
\end{equation}
\end{small}
where for the case $\ell = N$, $k$ is always $1$ as $W_N \in \mathbb{R}^{1 \times d}$.
Similarly,
\begin{small}
\begin{equation}
\frac{\partial \eta_z}{\partial b_{\ell}(r)} 
=\left(\prod_{i=\ell+1}^N W_i^n \right) \bfs_r
= \begin{cases}
\log(n) & \ell < N - 1, r = 1 \\
n^{-1} & \ell = N - 1, r = 1\\
1 & \ell = N, r = 1\\
0 & \text{otherwise}.
\end{cases}
\end{equation}
\end{small}

Using these calculations in (\ref{eqn-N-layer-partial-W}) and (\ref{eqn-N-layer-partial-b}), and noting, for our choices of $x^n$, $\eta_\bfz = 0$ and $\eta_{\bfs_1} = \log(n)$,
\begin{small}
\begin{equation}
\frac{\partial f(x^n)}{\partial W_{\ell}(k,r)} = -\frac{1}{2(1+n)} \times
\begin{cases}
\log(n) & \ell < N - 1, k=r=1 \\
n^{-1} & \ell = N - 1, k = r = 1 \\
n\log(n) & \ell = N, r = 1 \\
0 & \text{otherwise},
\end{cases}
\end{equation}
\end{small}
and
\begin{small}
\begin{equation}
\frac{\partial f(x^n)}{\partial b_{\ell}(r)} = 
\frac{n - 1}{4(n+1)} \times 
\begin{cases}
\log(n) & \ell < N - 1, r = 1\\
n^{-1} & \ell = N - 1, r = 1\\
1 & \ell = N, r = 1\\
0 & \text{otherwise}.
\end{cases}
\end{equation}
\end{small}
Hence, for $n > 2$,
\begin{small}
\begin{align}
\normD{ \nabla f(x^n) }_2^2 
&= \frac{n^2\log(n)^2 + n^{-2} + (N-2) \log(n)^2}{4(1+n)^2} + \frac{(n-1)^2(1 + n^{-2} + (N-2)\log(n)^2)}{16(n+1)^2} \\
&\in \left[\frac{\log(n)^2}{16}, \frac{N \log(n)^2}{2} \right]. \label{eqn-n-layer-grad-bound}
\end{align}
\end{small}%
We now lower bound the $(1,1)$ entry-wise norm of the Hessian with the second derivative of $f$ with respect to $W_{N}(1,1)$. Since $\eta_z$ is linear in $W_{N}(1,1)$, the second derivative of $\eta_z$ with respect to $W_{N}(1,1)$ is $0$. Hence,
\begin{equation}
\normD{ \nabla^2 f(x^n) }_{(1,1)} \geq \left\vert \frac{\partial^2 f(x^n)}{\partial W_{N}(1,1)^2} \right\vert = \frac{n}{2(1+n)^2} n^2 \log(n)^2 \geq \frac{n \log(n)^2}{8}.
\end{equation}

We now choose 
\begin{equation}
\kappa_n = \frac{n \log(n)^2}{8} \left( \frac{2}{N \log(n)^2} \right)^{\rho / 2}.
\end{equation}
With this choice, we check the hypotheses of Lemma \ref{result-local-only}. For all $\rho \geq 0$ and $n > 2$,
\begin{enumerate}
\item $\kappa_n$ diverges. 
\item Using the lower bound in (\ref{eqn-n-layer-grad-bound}), $\kappa_n \normI{\nabla f(x^n)}_2^\rho$ diverges since
\begin{equation}
\kappa_n \normD{ \nabla f(x^n) }_2^\rho 
\geq \frac{n \log(n)^2}{8} \left( \frac{1}{8 N} \right)^{\rho / 2}.
\end{equation}
\item Using the upper bound in (\ref{eqn-n-layer-grad-bound}), $\normI{ \nabla^2 f(x^n)}_{(1,1)} \geq n \log(n)^2 / 8 \geq \kappa_n \normI{\nabla f(x^n)}_2^\rho$.
\end{enumerate}
Thus, $\nabla f$ is \textit{not} $\rho$-order Lipschitz continuous.
\end{proof}

\section{Conclusion} \label{sec-conclusion}
In this brief work, we discuss recent advances in smoothness conditions that are used for optimization methods that \textit{do not} enforce descent, as inspired by applications in deep learning learning.
While the recently proposed, equivalent conditions of $\rho$-order Lipschitz continuity and $\rho$-integrated Lipschitz continuity (with $\rho > 0$) are more general than global Lipschitz continuity, they are not applicable to the gradient of the objective function for training a deep linear neural network of arbitrary depth and dimension. 
On the other hand, the classical condition of the gradient function being locally Lipschitz continuous is applicable to the gradient of this optimization problem. 
Owing to this result, we suggest that the use of $\rho$-order or $\rho$-integrated Lipschitz continuity of the gradient function be demonstrated on the function class of interest before it is used to analyze the behavior of an optimization method on this function class.


\pagebreak 













\bibliography{smoothness_concepts.bib}

\end{document}